\title{\LARGE \bf
CATCH-FORM-3D: Compliance-Aware Tactile Control and Hybrid Deformation Regulation for 3D Viscoelastic Object Manipulation
}
\author{Hongjun Ma, Weichang Li 
\thanks{*This work is supported by the National Nature Science
Foundation of China under Grant (62473158), Guangdong Basic and Applied Basic Research Foundation under Grant (2022B1515120017, 2023A1515011836).}
\thanks{The authors are all affiliated with with School of Automation Science and Engineering,
        South China University of Technology, 510641, Guangzhou, China; Institute for Super Robotics (Huangpu), 510700, Guangzhou, China. Corresponding author’s email:  {\tt\small mahongjun@scut.edu.cn}}%
}
\begin{document}

\maketitle
\thispagestyle{empty}
\pagestyle{empty}

\begin{abstract}

This paper  investigates a framework (CATCH-FORM-3D) for the precise contact force control 
and surface deformation regulation in viscoelastic material manipulation. 
A  partial differential equation (PDE) is proposed to model the spatiotemporal stress-strain dynamics, integrating
3D  Kelvin–Voigt (stiffness-damping) and Maxwell (diffusion) effects to capture the material’s viscoelastic behavior.
Key mechanical parameters (stiffness, damping, diffusion coefficients) are estimated in real time via a PDE-driven observer.
This observer fuses visual-tactile sensor data and experimentally validated forces to generate rich regressor signals.
Then, an inner-outer loop control structure is built up. In the outer loop, the reference deformation is updated by a novel admittance control law, i.e., a proportional-derivative (PD) feedback law with contact force measurements, ensuring that the system responds adaptively to external interactions.
 In the inner loop, a reaction-diffusion PDE for the deformation tracking error is formulated and then exponentially stabilized by conforming the contact surface
to analytical geometric configurations (i.e., defining Dirichlet boundary conditions). This dual-loop architecture enables the effective deformation regulation in dynamic contact environments.
 Experiments using a PaXini robotic hand 
 demonstrate sub-millimeter deformation accuracy and stable force tracking ($\pm 5 \%$ deviation). The framework advances compliant robotic interactions in applications like industrial assembly, polymer shaping, surgical treatment, and household service.

\end{abstract}

\section{INTRODUCTION}

Viscoelastic material manipulation (e.g., metal sheets, fabrics, biological tissues) presents significant robotic challenges due to their hybrid mechanical behaviors combining elastic recovery and viscous flow \cite{survey}. Autonomous handling is critical for precision tasks in industrial assembly (e.g., workpiece installation), surgical robotics (e.g., organ retraction), and domestic automation (e.g., cloth folding). Current robotic strategies fail to achieve desired deformations and stable contact forces due to insufficient processing of spatiotemporal coupling effects: stress relaxation, strain-rate damping, and geometric nonlinearity \cite{survey1}, \cite{survey2}. We propose a framework that coordinates deformation shaping with compliant interaction forces via closed-loop control, enabling efficient surface geometry transformation while maintaining low-error force profiles. This approach addresses current limitations in adaptive, safe human-centric applications by integrating viscoelastic dynamics with robotic manipulation physics.

Most viscoelastic models decouple elastic-viscous behaviors \cite{c1}. For example, the Kelvin–Voigt model captures strain-rate damping but ignores stress relaxation,  and the Maxwell model emphasizes stress diffusion while oversimplifying transient deformations. Although hybrid frameworks (e.g., Burgers models) partially unify these effects in low-dimensional regimes, they fail to resolve 3D spatiotemporal stress-strain coupling during robotic manipulation. To address this, we propose a 3D continuum model that unifies the Kelvin–Voigt model’s (stiffness-damping) and Maxwell model’s (diffusion) dynamics, explicitly governing the interplay of energy storage (stiffness), viscous dissipation (damping), and stress redistribution (diffusion). 


The parameter identification methods for viscoelastic materials often rely on offline calibration, limiting their applicability to dynamic robotic manipulation tasks \cite{pd}. For instance, frequency-domain techniques require predefined sinusoidal loading \cite{c2}, optimization-based approaches \cite{c3} depend on high-fidelity simulations but struggle with real-time adaptation, while black-box methods \cite{c4} (e.g., neural networks) face challenges in interpretability and often demand extensive training datasets. Our parameter identification scheme leverages the PDE to establish an interpretable observer structure, enabling dynamic processing of multiple regressor signals and facilitating real-time estimation of stiffness, damping, and diffusion coefficients.
The observer is operated in a data-driven way by systematically recording historical experimental force inputs and visual-tactile sensor outputs, then reutilizing diversified data to ensure sufficient information richness for resolving parameter ambiguities.


The contact force and deformation control of 3D soft objects 
 are challenging due to dynamic couplings between deformations, viscoelasticity, and contact forces. 
 Although finite element methods \cite{c5},  physics simulations \cite{c6}, machine learnings \cite{c7} are excellent in off-line modeling and design,  
their computational demands and lack of real-time adaptability make them unsuitable for dynamic  control in time-critical applications. 
On the other hand, to perform rapid response and adaptation by classical admittance control strategies \cite{c8}, 
the over-simplification by reducing dimension and localizing working region, often struggles in 3D spatial uncertain regimes.
 Thus, we propose a physics-guided admittance control framework that explicitly deals with the 
 re-plannings of reference deformation under external interaction with viscoelastic objects,
 avoiding time-consuming simulations, machine learning, or black-box optimization.
The formulated closed-loop system is a reaction-diffution PDE plant with good energy compliance and quasi-static elasticity, 
which is  stabilized by enforcing boundary deformations (essentially defining Dirichlet boundary conditions) using 
parametric geometric primitives (generated by a real-time feedback deformation information).  Such inner-outer structure ensures an exponential convergence rate of deformation fields and stabilizes contact forces under 3D viscoelastic environments.

Our contributions are summarized as follows:

(1) A unified 3D viscoelastic continuum model integrating Kelvin–Voigt and Maxwell dynamics through governing equations for energy storage, viscous dissipation, and stress redistribution in multiple-parameter fields.

(2) An observer-based parameter identification method that embeds an interpretable physical model and  regresses historical actuation forces and visual-tactile signals.

(3) A physics-guided compliance modulation for robotic manipulation that interplays deformation modulation with compliant force regulation via  viscoelastic dynamics.

(4) A boundary control strategy synthesizing Dirichlet boundary conditions through analytical geometric templates to ensure globally convergent strain fields and force stabilization under large deformations.

(5) An  inner-outer admittance control architecture achieving low-error force profiles and precise geometry transformations in dynamic tasks (e.g., industrial shaping, organ retraction), balancing computational efficiency and physical fidelity while outperforming time-consuming methods.

\section{RELATED WORK}

\subsection{Viscoelastic Materials Models}

Viscoelastic materials, widely used in soft robotics, biomedical engineering, and industrial manufacturing, exhibit complex time-dependent stress-strain behaviors that challenge precise robotic manipulation \cite{survey}. Various models \cite{survey1}, e.g., Maxwell, Kelvin–Voigt, Standard Linear Solid, have been developed to describe these behaviors, each balancing accuracy, complexity, and computational demands.  
However, key challenges remain, including real-time parameter identification, model generalizability for anisotropic and heterogeneous materials, and the integration of physics-based models with machine learning and high-fidelity sensing technologies \cite{c1}.
Current research are focused on efficient online parameter estimation, extended model applicability, and hybrid physics-data-driven approaches to enhance adaptability and robustness . In contrast, we will propose a 3D continuum model that unifies the Kelvin–Voigt model and Maxwell model, explicitly governing the interplay of energy storage, viscous dissipation, and stress redistribution.

\subsection{Viscoelastic Parameters Estimation} 

Viscoelastic parameter estimation is essential to exactly determine all parameters in a selected model
by observing the input-output behaviors of materials like polymers and biological tissues. 
 Early approaches \cite{pd} relied on quasi-static or dynamic mechanical testing (e.g., stress relaxation, creep, oscillatory shear) to fit classical models such as Maxwell, Kelvin–Voigt, or Standard Linear Solid, extracting parameters like stiffness, damping, and relaxation times through curve-fitting algorithms (e.g., least squares). Advances in inverse problem formulations enable parameter identification via finite element analysis (FEA) combined with optimization techniques (e.g., genetic algorithms, gradient-based methods), utilizing strain fields from digital image correlation or tactile sensors. Sensor fusion (visual-tactile data, force-deformation feedback) and machine learning (neural networks, Gaussian processes) further improve accuracy, particularly for nonlinear and anisotropic materials \cite{c2}-\cite{c7}.
 Challenges remain in balancing computational efficiency with model fidelity, especially under large deformations. Recent trends focus on hybrid frameworks combining physics-based models with data-driven corrections for adaptive estimation. 
So, this paper  emphasize a real-time identification using PDE model via a data-driven persistent excitation,
 to enhance precision in viscoelastic material manipulation.

\subsection{Force-Deformation Control} 

Force-deformation control is critical for robotic manipulation of viscoelastic, soft, or fragile materials, requiring simultaneous regulation of contact forces and material deformation. Traditional methods \cite{c8} like impedance and admittance control often fail to address coupled dynamics of stress-strain relationships for real-time adaptation to behaviors like creep and stress relaxation. 
Although sensor fusion (visual, tactile data) enhances deformation estimation, with high-resolution tactile arrays (e.g., GelSight, BioTac) providing micron-scale feedback, it is difficult for classical control methods (MPC, PID) to 
adjust reference trajectories based on force-deformation objectives while stabilizing tracking errors. Finite element methods \cite{c5}, physics simulations \cite{c6}, and machine learning \cite{c7}  improves performance in offline modeling and design, but their high computational demands and inability to adapt in real time render them impractical for dynamic control in time-sensitive applications. Compared
with existing results  \cite{c8}-\cite{c12}, we explore an admittance controller for contact force and surface deformation regulation, and 
stabilize the tracking for reference deformation with a reaction-diffusion PDE-based advanced control.

\section{Unified 3D Viscoelastic Continuum Model}

A straightforward way to model the interaction between robotic manipulators and target object is by establishing a force-displacement relationship. This is often represented by analytical models combining springs and dampers (Fig. \ref{fig:model}).

\begin{figure}[htbp]
    \centering
    \includegraphics[height=18mm, width=60mm]{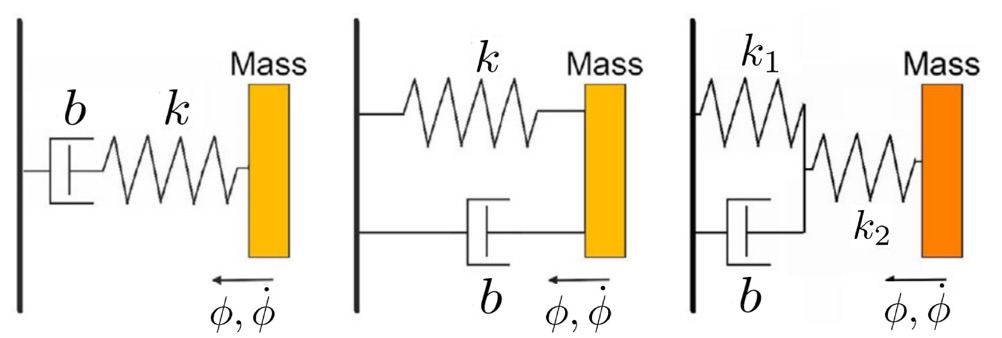}   
    \caption{Kelvin-Voigt, Maxwell and Burgers Models in 1D.}
    \label{fig:model}
\end{figure}

\subsection{Kelvin–Voigt and Maxwell Effects in 1D}

The Kelvin–Voigt model consists of a spring in parallel with
a damper  and it is dynamic equation is described by
\begin{align}
  f(t) = k \phi(t) + b \dot{\phi}(t)
  \label{eq:kelvin_voigt}
\end{align}
which includes the object's reaction force $f$ under strain; the indentation depth $\phi$, measured as object displacement from its rest position; the deformation velocity $ \dot{\phi}$; and the elastic and damping coefficients, $ k $ and $ b $, respectively. This equation is
suitable for modeling materials that exhibit creep and delayed elasticity, such as soft tissues or polymers.

The Maxwell model is represented by the series of a spring
and a damper and is expressed as the following equation
\begin{align}
  f(t) = b \dot{\phi}(t)-\alpha \dot{f}(t)
  \label{eq:maxwell}
\end{align}
where $\dot{f}$ is the derivative of the exerted force and $\alpha= b/k$, which is
useful for modeling materials that exhibit stress relaxation, such as biological fluids or viscoelastic liquids.

The Kelvin-Voigt model and the Maxwell model are generalized into the  Burgers force-displacement relationships. 
\begin{align}
  f(t) = k \phi(t) + \beta \dot{\phi}(t)-\gamma   \dot{f}(t)
  \label{eq:general1D}
\end{align}
where  $k = k_1k_2/(k_1 + k_2)$, $\beta=bk_2/(k_1 + k_2)$, $\gamma=b/(k_1 + k_2)$, $k_1$, $k_2$
and $b$ are the elastic and damping coefficients.

\subsection{3D Viscoelastic Continuum Model}

\begin{figure}[htbp]
    \centering
    \includegraphics[height=20mm, width=50mm]{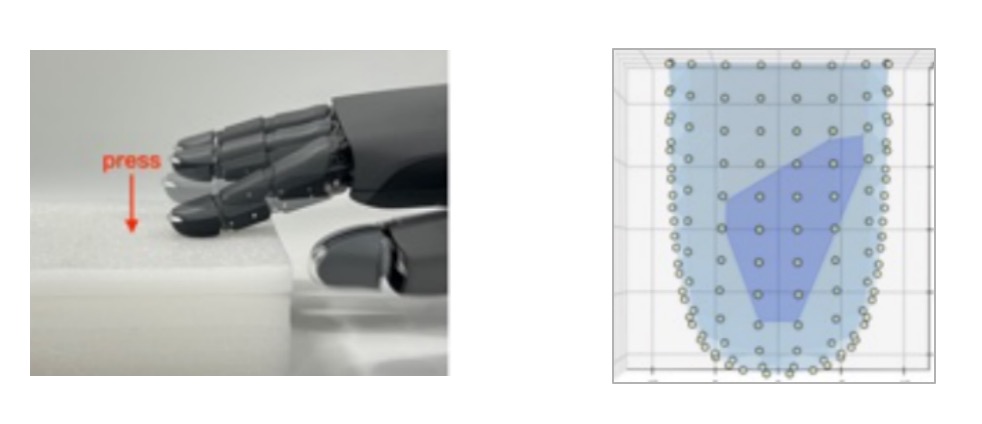}   
    \caption{The visual-tactile sensor for contact force-deformation.}
    \label{fig:sensor}
\end{figure}

\begin{figure}[htbp]
    \centering
    \includegraphics[height=40mm, width=50mm]{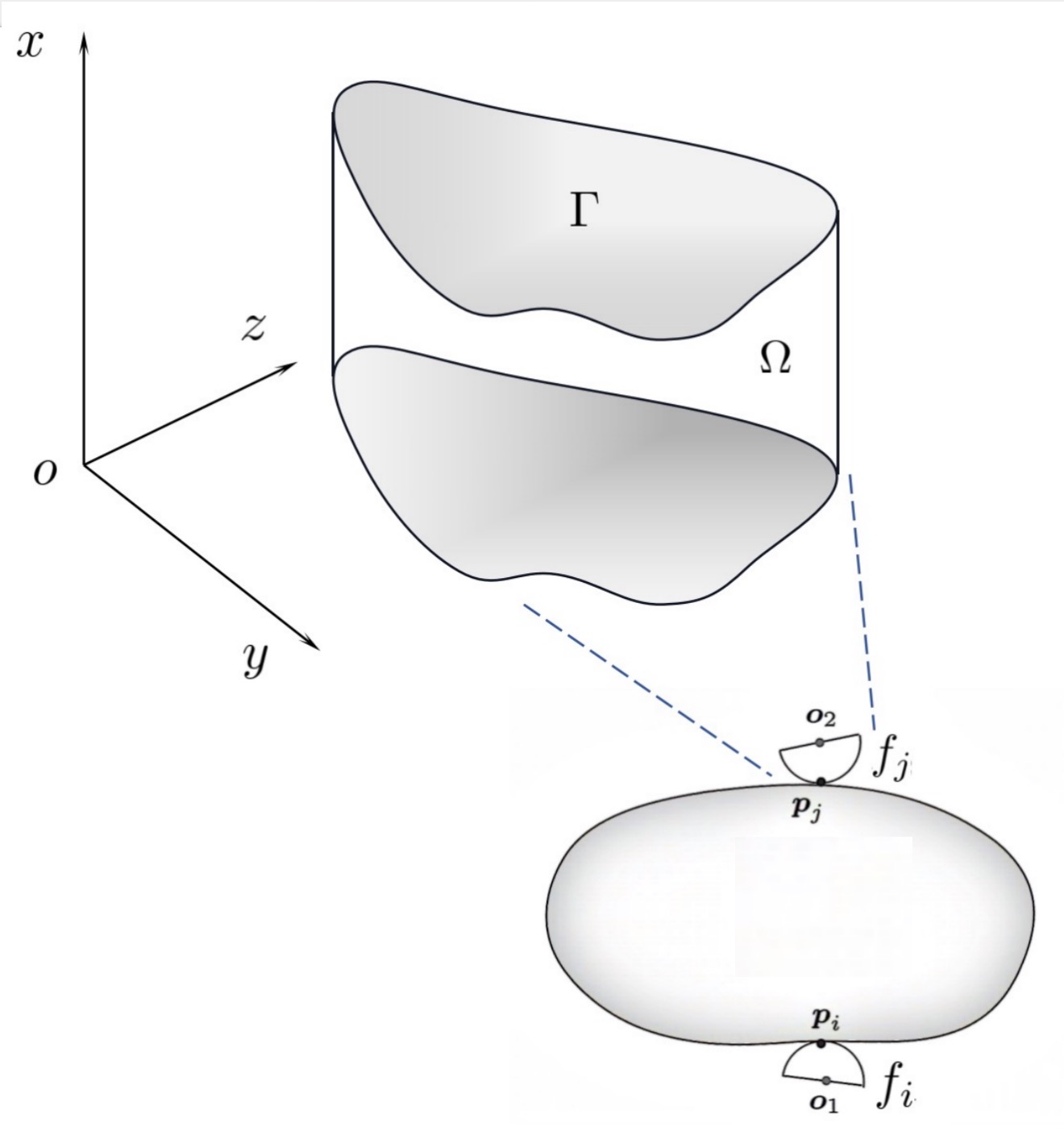}   
    \caption{The domain $\Omega$ for the manipulation.}
    \label{fig:geometry}
\end{figure}

To further address the limitations of 1D Kelvin–Voigt and Maxwell models, we propose  a PDE which unifies these dynamics in a 3D spatiotemporal continuum as below
\begin{align}
&\dot{\phi}(t, x,y,z)=\epsilon \Delta \phi (t, x,y,z)+ a_1 f (t, x,y,z)\notag\\
&+a_2 \dot{f} (t, x,y,z)+ \lambda \phi(t, x,y,z)
 \label{eq:3dpde}
\end{align}
where $\phi(t, x,y,z)$ is the deformation scalar field at time $t$ and spatial position $(x,y,z)$, representing the displacement of the object relative to its reference configuration (undeformed state);  $f(t,x,y,z)$ is a scalar filed describing the effect of
the contact force projecting on the direction of the gradient $\nabla\phi$
representing the rate and direction of change of the deformation; 
$\Delta\phi$ is the Laplace operator, representing the spatial variation, diffusion, and resistance to deformation. This formulation generalizes the 1D  viscoelastic model  into a 3D counterpart but without directly deal with a vector field of contact force, useful to give an analytical design procedure.

Here, we don't apply directly the 3D vector fields to describe contact forces and deformations while
using simplified scalar fields, assuming that the shear force is zero which can be compensated in manipulation if it's not.
As shown in Fig. \ref{fig:sensor}, the contact force-deformation scalar fields are computed
by the sensing data from the applied visual-tactile sensor.

Without loss of generality, consider the geometry (Fig. \ref{fig:geometry}) for $(x,y,z)\in\Omega$, where $\Omega$ is a cylinder with top and bottom of arbitrary shape $\Gamma$. This configuration of the domain $\Omega$ is convenient because we can view the problem as many 1D problem with $0\leq x\leq \delta$ by fixing $y, z$. 
We assume the following Dirichlet boundary conditions on the boundary $\partial \Omega$ 
\begin{align*}
\phi(x, y,z)=0, ~ (x,y,z)\in \partial \Omega \setminus \{x=\delta\}
\end{align*}
the actuation force is applied at the top of the cylinder $x=\delta$,
where the actuation force is applied as  in the later sections.
The deployment of visual and tactile data for contact force and
deformation acquisition is performed by  visual-tactile sensors, 
equipped on robotic hand and outputs data matrix of 3D surface position change
and 3D contact force.  Note that such formulation is also a reasonable approximation 
if the contact force is actuated on a flat enough area or the contact area is relatively small compared 
to a large object.

Obviously, $\epsilon>0$, $a_1=\beta^{-1}$,
$a_2=\beta^{-1}\gamma$, $\lambda=-\beta^{-1}k$  are the parameters describing
mechanical properties of energy storage (stiffness), viscous dissipation (damping), and stress redistribution (diffusion).
This integration bridges classical distributed parameter models with the needs of robotic manipulation, enabling adaptive handlings of viscoelastic materials in dynamic tasks. We will omit the variable $(t,x,y,z)$ in the formulas if it does not cause confusion.
\subsection{Problem Statement}

Now, the considered problem is claimed, that is,  the parameter identification is first studied to 
achieve precise estimations for all viscoelastic parameters in the PDE, then, based on the established 3D 
viscoelastic continuum model,  the contact force and deformation control will be explored to
realize the trackings of the scalar fields of $f(x,y,z)$ and $\phi(x,y,z)$ to their reference fields $f_d(x,y,z)$, 
$\phi_d(x,y,z)$ respectively. To this end, an admittance control architecture is developed such that 
the reference deformation $\phi_d(x,y,z)$ is on-line modified to support  compliant contact force 
interactions, which is rapidly tracked by conforming the Dirichlet boundary conditions to 
analytical geometric configurations.

\section{Observer-Based Parameter Identification}

\subsection{Adaptive Observer for Dynamic PDE System} 

Our parameter identification scheme (Fig. \ref{fig:observer}) is based on the proposed  PDE (\ref{fig:observer}) which
is a dynamic system with contact forces inputs and visual-tactile sensing outputs. Since
the PDE has an interpretable structure, we can use it directly to design an adaptive observer, enabling dynamic processing of multiple regressor signals and facilitating real-time estimation of stiffness, damping, and diffusion coefficients.
This observer employs a copy of the PDE plant plus with an additional feedback term that is related with a regress signal
\begin{align*}
\dot{\hat\phi}=\hat\epsilon \Delta \phi + \hat a_1 f +\hat a_2 \dot{f}+ \hat\lambda \phi+L(\phi-\hat\phi)
\end{align*}
in which $\hat\phi$, $\hat\epsilon$, $\hat a_1$, $\hat a_2$, $\hat\lambda$ are the estimations for the counterpart in the PDE,  $L= L'+ K \psi^T\psi$, $L'>0$, $\psi\in \mathbf{R}^{4\times 1}$ and $K>0$ 
\begin{align}
\dot{\hat\epsilon}=-K \psi_1 (\phi -\hat \phi), ~\dot{\hat a}_1=-K \psi_2 (\phi -\hat \phi),  \notag\\
\dot{\hat a}_2=-K \psi_3 (\phi -\hat \phi), ~\dot{\hat\lambda}=-K \psi_4 (\phi -\hat \phi)
\end{align}
where $\psi=[\psi_1, \psi_2, \psi_3, \psi_4]^T$ is a regressor signal which is  generated by a  system filtering the signal
$\Psi=[\Delta \phi,  f, \dot{f},  \phi]^T$
\begin{align}
\dot{\psi}=-L' \psi+\Psi.
\label{eq:regressor}
\end{align}

\begin{figure}[htbp]
    \centering
    \includegraphics[height=40mm, width=70mm]{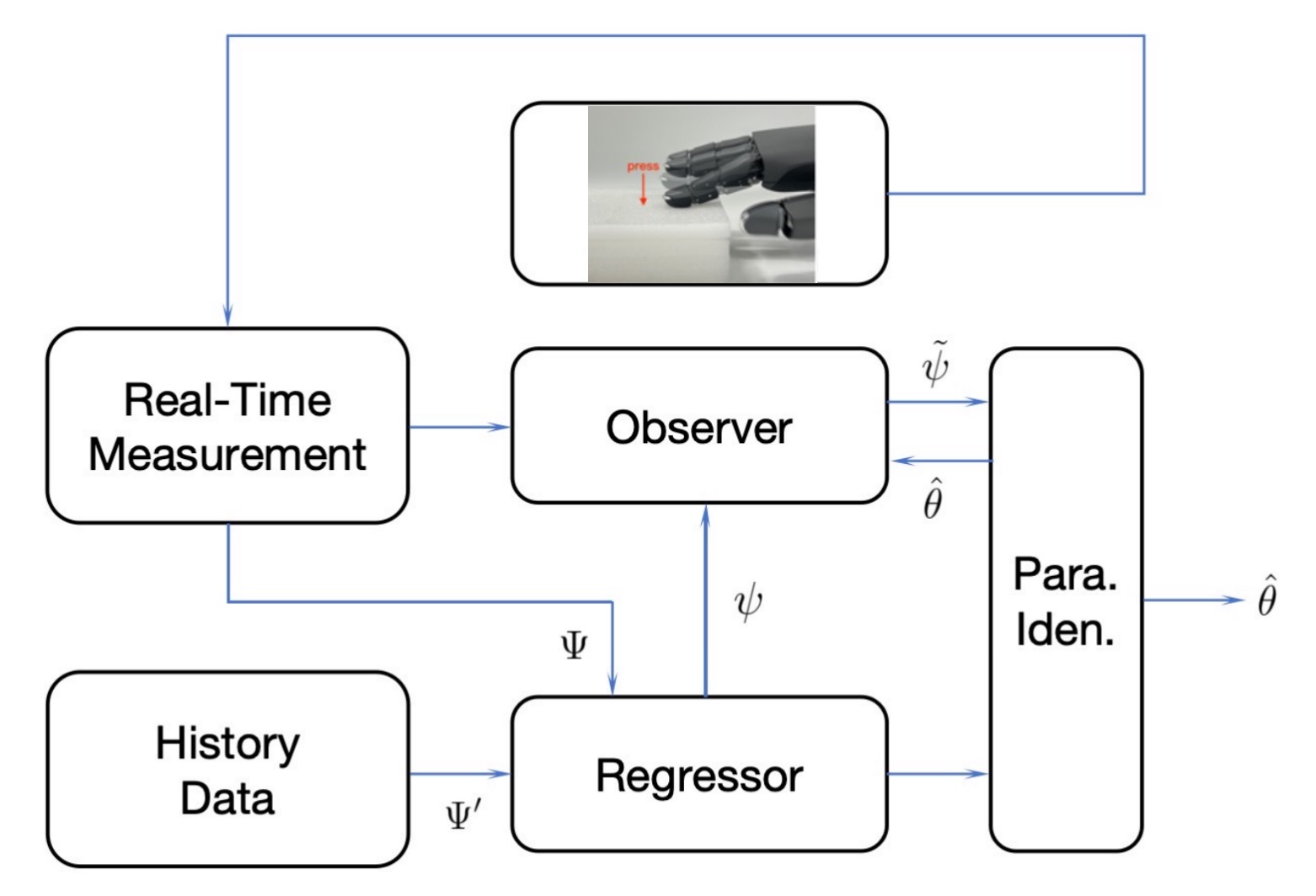}   
    \caption{PDE-based observer \& parameter estimator.}
    \label{fig:observer}
\end{figure}

\subsection{Historical Data Replay for Persistent Excitation} 

To ensure the convergence of estimation of $\hat\theta=[\hat\epsilon, \hat{a}_1, \hat{a}_2, \hat{\lambda}]^T$ to its exact value,
a practical strategy is  to inject experimentally validated input signals containing enough frequency content to excite all system modes. This condition is called as `persistent excitation condition' that also provides robustness to noise and prevent ill-conditioning in regression.
\begin{align}
\int_{t}^{t+\tau}\psi \psi^T dt \succ 0, ~\tau >0.
\label{eq:PE1}
\end{align}
To this end, the regressor signal $\psi$ is re-generated by combing current and historical dada of the actuation force and visual-tactile sensing $\Psi=[\Psi_1 \cdots, \Psi_m]$, 
$\phi=[\phi_1 \cdots, \phi_m]^T$.
\begin{align}
\dot{\psi}=-L' \psi+\Psi, ~L'\succ 0,~L'\in \mathbf{R}^{m\times m}, ~\Psi\in \mathbf{R}^{4\times m}
\label{eq:regressor1}
\end{align}
which  generates $L=L'+K\psi^T \psi \succ 0, K\succ 0$  to be used by
\begin{align}\label{Psi}
\dot{\hat\phi}=\Psi^T\hat\theta+L(\phi-\hat\phi), ~\phi \in \mathbf{R}^{m\times 1}, ~\hat\theta \in \mathbf{R}^{4\times 1}.
\end{align}
The estimation of $\theta$ is written in the following compact form
\begin{align}\label{theta}
\dot{\hat\theta}=K \psi(\phi-\hat\phi), ~\phi \in \mathbf{R}^{m\times 1}, ~\hat\theta \in \mathbf{R}^{m\times 1}.
\end{align}
This means that we can diversify actuation force inputs and collect the corresponding visual-tactile sensing data to ensure persistent excitation, enabling accurate estimations for all elements in $\theta$ until each one is precisely identified, i.e., the
autocorrelation matrix $\psi \psi^T$ is full rank over a time interval.
\newtheorem{lemma}{\bf Lemma}
\begin{lemma}\label{lemma1}
Under the PE condition (\ref{eq:PE1}), the parameter estimation  (\ref{theta}) with regressor (\ref{eq:regressor1}) and observer (\ref{Psi})  satisfies  
\begin{align*}
\lim_{t\rightarrow\infty}\hat\theta=\theta, ~\lim_{t\rightarrow\infty}\hat\phi=\phi.
\end{align*}
\end{lemma}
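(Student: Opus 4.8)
The plan is to recast the estimation problem in error coordinates and then split the argument into a Lyapunov step, which yields boundedness together with convergence of the observation error, followed by a persistency-of-excitation step that upgrades this to parameter convergence. Introduce the observation error $\tilde\phi = \phi - \hat\phi$ and the (constant-parameter) estimation error $\tilde\theta = \theta - \hat\theta$. Subtracting the observer \eqref{Psi} from the plant written in the compact form $\dot\phi = \Psi^T\theta$ gives $\dot{\tilde\phi} = \Psi^T\tilde\theta - L\tilde\phi$, while differentiating $\tilde\theta$ and using the update law \eqref{theta} gives $\dot{\tilde\theta} = -K\psi\tilde\phi$. The coupled system is linear time-varying through the signals $\Psi$, $\psi$, and $L = L' + K\psi^T\psi$.

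First I would establish boundedness and square-integrability of $\tilde\phi$ via the Lyapunov candidate
\begin{align*}
V = \tfrac{1}{2}\tilde\phi^T\tilde\phi + \tfrac{1}{2}\tilde\theta^T K^{-1}\tilde\theta .
\end{align*}
Along the error dynamics its derivative is
\begin{align*}
\dot V = -\tilde\phi^T L\tilde\phi + \tilde\phi^T\Psi^T\tilde\theta - \tilde\theta^T\psi\tilde\phi .
\end{align*}
The decisive feature of the design is that the injection gain $K\psi$ in the update law is matched to the regressor appearing in the prediction error, so the two cross terms cancel and $\dot V = -\tilde\phi^T L\tilde\phi \le 0$ because $L = L' + K\psi^T\psi \succ 0$. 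Hence $V$ is nonincreasing, $\tilde\phi$ and $\tilde\theta$ are bounded, and integrating $\dot V$ shows $\tilde\phi \in L_2$. Since $\tilde\phi$ and $\dot{\tilde\phi}$ are bounded, $\tilde\phi$ is uniformly continuous, and Barbalat's lemma gives $\tilde\phi \to 0$, i.e. $\hat\phi \to \phi$.

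The cancellation above requires the regressor driving the prediction error to coincide with the regressor $\psi$ used in the update; this is exactly what the filter \eqref{eq:regressor1}, $\dot\psi = -L'\psi + \Psi$, is present to arrange. I would make this precise with a swapping argument: filtering the regression identity through $(s + L')^{-1}$ shows that $\Psi^T\tilde\theta$ and $\psi^T\tilde\theta$ differ only by a term generated from exponentially decaying filter transients, so the clean identity $\dot{\tilde\phi} = \psi^T\tilde\theta - L\tilde\phi$ holds up to an exponentially vanishing perturbation that does not affect the sign of $\dot V$ asymptotically. This reconciliation of the true regressor $\Psi$ with the filtered regressor $\psi$ is the step I expect to be the main obstacle, since it is where the apparent notational gap between \eqref{Psi} and \eqref{theta} must be closed rigorously.

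Finally I would convert convergence of the observation error into parameter convergence using the PE condition \eqref{eq:PE1}. Boundedness with $\tilde\phi \to 0$ alone does not force $\tilde\theta \to 0$; one must exploit excitation. The standard route is to show that the homogeneous LTV error system is uniformly exponentially stable whenever $\int_t^{t+\tau}\psi\psi^T\,dt \succ 0$ uniformly in $t$: this is the classical persistency-of-excitation theorem for gradient-type estimators, in which the reduced flow $\dot{\tilde\theta} = -K\psi\psi^T\tilde\theta$ is uniformly exponentially stable under PE and the $\tilde\phi$-coupling enters as a vanishing, exponentially stable cascade. Uniform exponential stability then yields $\tilde\theta \to 0$, and with it $\hat\theta \to \theta$ and $\hat\phi \to \phi$, completing the argument.
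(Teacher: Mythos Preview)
Your Lyapunov computation is correct up to the line
\[
\dot V = -\tilde\phi^T L\tilde\phi + \tilde\phi^T\Psi^T\tilde\theta - \tilde\theta^T\psi\tilde\phi,
\]
but the two cross terms do \emph{not} cancel: the prediction error is driven by the raw regressor $\Psi$, while the update law uses the filtered regressor $\psi$, and $\Psi\neq\psi$ in general. You recognize this and propose a swapping-lemma repair, asserting that $\Psi^T\tilde\theta$ and $\psi^T\tilde\theta$ differ only by an exponentially decaying filter transient. That assertion is the gap, and it is false. Even when $\tilde\theta$ is constant, $(\Psi-\psi)^T\tilde\theta$ is the difference between a signal and its low-pass-filtered version, which is a \emph{persistent} signal whenever $\Psi$ is persistently exciting; it does not decay. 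The sign-indefinite term $\tilde\phi^T(\Psi-\psi)^T\tilde\theta$ therefore survives in $\dot V$, and the argument stalls exactly at the step you flagged as the main obstacle.

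What is actually true, and what the paper exploits, is not that the \emph{regressors} agree up to a transient, but that the \emph{observation error itself} equals $\psi^T\tilde\theta$ up to a transient. Introducing the auxiliary variable $\eta=\tilde\phi-\psi^T\tilde\theta$ and differentiating, the specific gain structure $L=L'+K\psi^T\psi$ produces exact cancellations: the term $-\psi^T\dot{\tilde\theta}=K\psi^T\psi\,\tilde\phi$ (precisely the swapping residual you were worried about) is absorbed by the $K\psi^T\psi$ piece of $L$, leaving $\dot\eta=-L'\eta$ with no remainder. Then $\tilde\phi=\psi^T\tilde\theta+\eta$ turns the parameter dynamics into the clean cascade $\dot{\tilde\theta}=-K\psi\psi^T\tilde\theta-K\psi\eta$, with $\eta$ an exponentially vanishing input, and the PE condition gives $\tilde\theta\to 0$ directly. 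Your proposal never uses the extra $K\psi^T\psi$ term in $L$; that is the missing idea, and without it the quadratic Lyapunov function in $(\tilde\phi,\tilde\theta)$ cannot be made to close.
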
 
\begin{proof} (sketch) Define the estimation error $\tilde\theta=\theta-\hat\theta$, $\tilde\phi=\phi-\hat{\phi}$, define an
auxiliary variable $\eta=\tilde \phi-\psi^T \theta$, then
\begin{align*}
\dot{\eta}= \Psi\tilde\theta-L(\eta+\psi^T\tilde \theta)-\dot{\phi}\tilde\phi-\phi\dot{\tilde\theta}.
\end{align*}
By inserting the estimation laws $\tilde\theta$ and the dynamics of $\psi$,
\begin{align*}
\dot{\eta}=-L'\eta, ~\dot{\tilde\theta}=-K\psi\psi^T\tilde\theta-K\psi^T\eta
\end{align*}
 implies the exponential convergences of $\eta$, $\tilde\theta$. \end{proof}

\section{Physics-Guided Deformation Planning}

After the parameters $\epsilon$, $a_1$, $a_2$, and $\lambda$ are exactly identified, the ideal contact force can be designed using the PDE that explicitly connects contact forces with deformation profiles. This approach eliminates the need for time-consuming simulations, machine learning techniques, or black-box optimization methods, allowing for an efficient and precise determination of the desired contact force. A dual-loop architecture in Fig.\ref{fig:framework} is presented, which ensures a safe and effective force-deformation interaction,  making it suitable for robotic manipulation and human-robot collaboration when subject to 3D viscoelastic objects.
Recall the proposed PDE
\begin{align}
\dot{\phi}=\epsilon \Delta \phi + a_1 f +a_2 \dot{f}+ \lambda \phi,
 \label{eq:3dpdeid}
\end{align}
we can define the reference fields $f_d$ and $\phi_d$, then it implies
\begin{align}
\dot{\phi}_e=\epsilon \Delta \phi_e + a_1 f_e +a_2 \dot{f}_e+ \lambda \phi_e,
 \label{eq:3dpdeerror}
\end{align}
where $f_e$ and $\phi_e$ are the tracking errors of two scalar fields. 
Design an admittance control law of   $\phi_e$ according to $f_e$, $\dot f_e$
\begin{align}
&a_1 f_e +a_2 \dot{f}_e= \lambda_1 \phi_e+\lambda_2 \dot{\phi}_e, 
\label{eq:f2d}
\end{align}
To design a robust  admittance control loop,  we analyze its 
passivity and stability  that is to be used in the outer-loop.

The passivity requires that the system does not generate energy, meaning the integral of the input energy must be non-negative. For time-invariant systems, passivity is equivalent to the positive realness of the transfer function.
Taking the Laplace transform (assume zero initial conditions), then
\begin{align*}
G(s) = \frac{\Phi(s)}{F(s)} = \frac{a_1 + a_2 s}{\lambda_1 + \lambda_2 s}.
\end{align*}
For $ G(s) $ to be positive real, the denominator polynomial $ \lambda_1 + \lambda_2 s $ have roots with negative real parts, i.e.,   $\lambda_1/\lambda_2 > 0$.
For all $ \omega \in \mathbf{R} $, the real part of $ G(j\omega) $  is positive means that
\[
\text{Re}\{G(j\omega)\} = \frac{a_1 \lambda_1 + a_2 \lambda_2 \omega^2}{\lambda_1^2 + \lambda_2^2 \omega^2}>0,
\]
 so if $ a_1 \lambda_1 > 0 $ and $ a_2 \lambda_2 > 0 $, then  the system is passive.

\begin{lemma}\label{lemma2} If the parameters $\lambda_1$ and $\lambda_2$ satisfy that 
\begin{align*} 
    \lambda_1 > 0, ~ \lambda_2 > 0, 
\end{align*}
the admittance control loop system are passive and stable.
\end{lemma}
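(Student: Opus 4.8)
The plan is to prove the two claimed properties—stability and passivity—directly from the single-pole transfer function $G(s) = (a_1 + a_2 s)/(\lambda_1 + \lambda_2 s)$ already derived above, rather than constructing a state-space Lyapunov argument. The observation that drives everything is that this $G(s)$ is a first-order rational function, so its pole location and its frequency response are both available in closed form.

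First I would settle stability. Because $\lambda_2 > 0$, the transfer function has a unique pole at $s = -\lambda_1/\lambda_2$. With $\lambda_1 > 0$ this pole lies strictly in the open left-half plane, so the admittance loop is asymptotically (hence BIBO) stable; this is just the sign condition $\lambda_1/\lambda_2 > 0$ noted earlier, now read off from the hypotheses. Next I would establish passivity via the positive-realness criterion: a scalar rational $G(s)$ is positive real provided it has no poles in the closed right-half plane and $\mathrm{Re}\{G(j\omega)\} \ge 0$ for all real $\omega$. The no-RHP-pole requirement is exactly the stability just verified. For the frequency condition I would reuse the computed expression
\begin{align*}
\mathrm{Re}\{G(j\omega)\} = \frac{a_1 \lambda_1 + a_2 \lambda_2 \omega^2}{\lambda_1^2 + \lambda_2^2 \omega^2},
\end{align*}
whose denominator is strictly positive for every $\omega$, so the sign is governed entirely by the numerator $a_1\lambda_1 + a_2\lambda_2\omega^2$.

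The step that actually carries the argument is showing this numerator is nonnegative, and here the stated hypotheses $\lambda_1,\lambda_2 > 0$ do not suffice on their own—one must also know $a_1 > 0$ and $a_2 > 0$. I would supply these from the identified viscoelastic parameters: since $\beta = bk_2/(k_1+k_2) > 0$ and $\gamma = b/(k_1+k_2) > 0$ for the positive physical constants $k_1,k_2,b$, we obtain $a_1 = \beta^{-1} > 0$ and $a_2 = \beta^{-1}\gamma > 0$. Combining $a_1,\lambda_1 > 0$ with $a_2,\lambda_2 > 0$ gives $a_1\lambda_1 > 0$ and $a_2\lambda_2 > 0$, so the numerator is strictly positive and $\mathrm{Re}\{G(j\omega)\} > 0$ for all $\omega$, completing the passivity claim. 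The anticipated obstacle is precisely this implicit dependence on the signs of $a_1,a_2$: the lemma is phrased only in terms of $\lambda_1,\lambda_2$, so a rigorous proof must make explicit that the model-derived positivity of $a_1$ and $a_2$ is what bridges the gap between the stated hypotheses and the passivity conclusion—otherwise $\lambda_1,\lambda_2 > 0$ alone would leave the numerator sign undetermined.
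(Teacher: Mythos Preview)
Your approach is essentially identical to the paper's: the proof there simply refers back to the transfer-function analysis preceding the lemma, checking the pole location $-\lambda_1/\lambda_2$ and the sign of $\mathrm{Re}\{G(j\omega)\}$ via the same closed-form expression you use. You are in fact more careful than the paper, since you explicitly justify $a_1,a_2>0$ from the physical constants $k_1,k_2,b$, whereas the paper states the conditions $a_1\lambda_1>0$ and $a_2\lambda_2>0$ but then phrases the lemma in terms of $\lambda_1,\lambda_2$ alone without closing that gap.
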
 
\begin{proof}  It can be proved as described above.\end{proof}

The selections of $\lambda_1$ and $\lambda_2$ are of a clear physical interpretation, i.e.,
 $ a_1, a_2 $ control the admittance's response to static and dynamic force changes, and
 $ \lambda_1, \lambda_2 $ determine the stiffness and damping characteristics of the deformation adjustment,
 and the diffusion characteristic of the deformation tracking (detailed explanation in the next subsection).
 Here, $a_2$ is specially used to consider the sensitivity to the derivative of $f_e$ this is different to 
 the conventional admittance control strategy, due to the Maxwell effect.
 
If we consider for a linear inner-loop system, then the closed-inner-outer-loop system can be stabilized, by designing
the following closed-loop transfer function to be stable
\[
T(s) = \frac{G(s) C(s)}{1 + G(s) C(s)}
\]
where $ C(s) $ is the inner-loop controller, the stability can be verified  by ensuring all poles lie in the left-half plane.

However, the inner-loop system is essentially a complex PDE plant of 3D surface deformation,  which
is to be designed with an advanced controller in the next section to achieve a rapid tracking of a dynamic reference deformation.

\section{Boundary Control by Geometric Templates}

\begin{figure}[htbp]
    \centering
    \includegraphics[height=35mm, width=70mm]{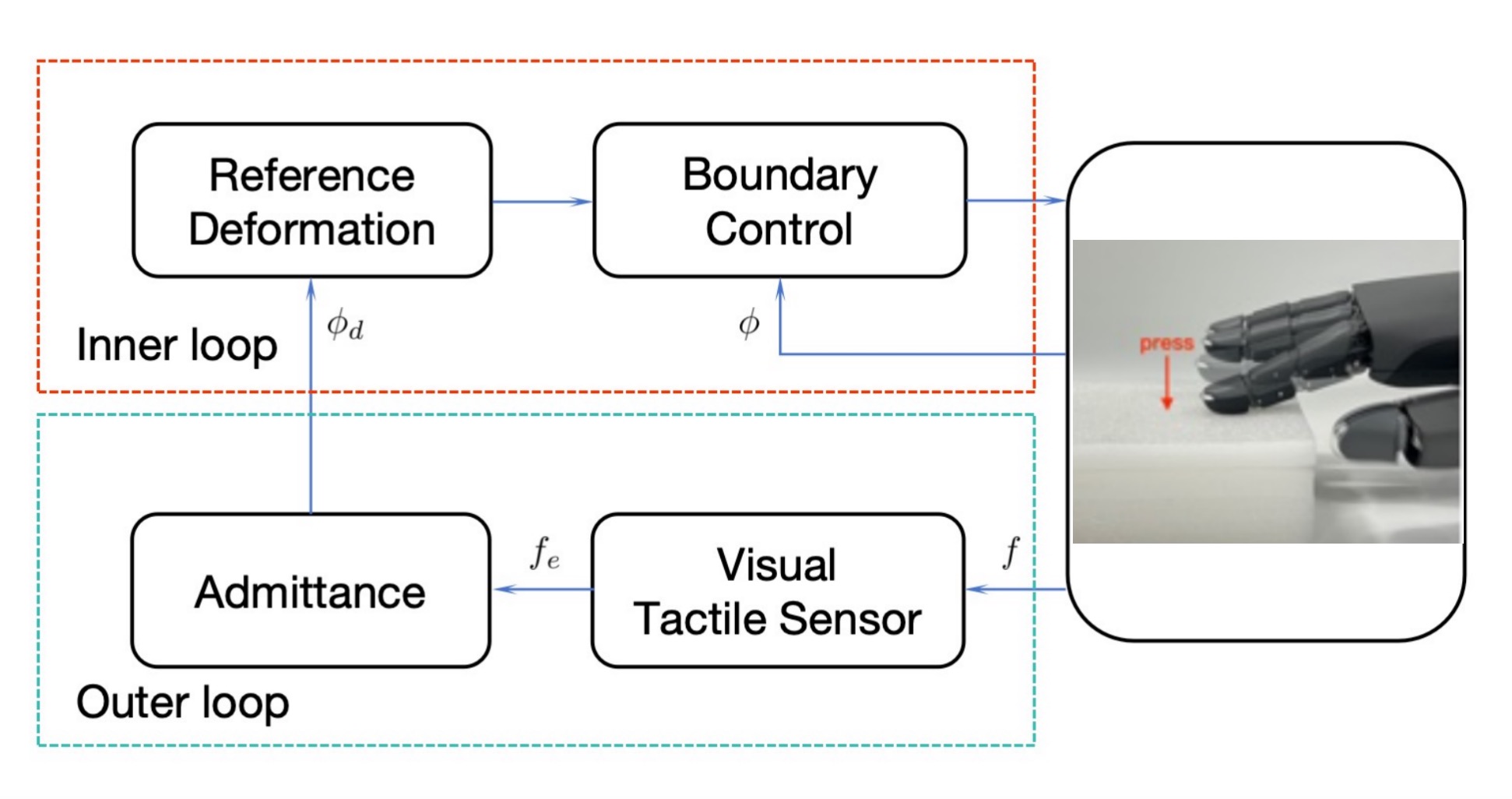}   
    \caption{The dual-loop control architecture of contact force-deformation.}
    \label{fig:framework}
\end{figure}

Now, we can discuss the inner-loop system,  by applying the admittance control law (\ref{eq:f2d}), the 
PDE of tracking errors $f_e$ and $\phi_e$ are formulated into the following new form, i.e.,
\begin{align}
(1-\lambda_2)\dot{\phi}_e=\epsilon \Delta \phi_e + (\lambda+\lambda_1) \phi_e,
 \label{eq:3dpdeerror1}
\end{align}
Choose the parameters satisfying $0<\lambda_2<1$, $\lambda_1>0$, and denote $\epsilon^*=(1-\lambda_2)^{-1}\epsilon$, $\lambda^*=(1-\lambda_2)^{-1}(\lambda+\lambda_1)$, then
\begin{align}
\dot{\phi}_e=\epsilon^* \Delta \phi_e + \lambda^* \phi_e
 \label{eq:3dpdeerror2}
\end{align}
is the resultant system which is a reaction-diffusion PDE. 
The solution of this PDE is well-behaved \cite{c10}, that is, the energy of the solution decays over time, and the maximum and minimum values of the solution typically occur at the initial or boundary conditions, and no new extrema arise in the interior region, the deformation distribution tends to become more uniform over time. Our goal is to drive deformation tracking error to zero by  a control mechanism.

In general, we can design the solution of PDE (\ref{eq:3dpdeerror2}), by considering the initial deformation as  $\phi_d(0, x,y,z)=0$  
and the on-line updated desired deformation $\phi_d(\infty, x,y,z)$,  the solution to the equation \(\dot{\phi}_e = \epsilon^* \Delta \phi_e + \lambda^* \phi_e\) can be constructed through the method of separation of variables and eigenfunction expansion of the Laplace operator. If the geometric shape is a cube with side length \(L\) and \(\phi_e = 0\) on the boundary, then the solution is written in the form below:
\begin{align*}
&\phi_e(t, x, y, z) = \sum_{n, m, p} C_{nmp} e^{-\epsilon^* \left( \left( \frac{n \pi}{L} \right)^2 + \left( \frac{m \pi}{L} \right)^2 + \left( \frac{p \pi}{L} \right)^2 + \frac{\lambda^*}{\epsilon^*} \right) t}\\
&\times \sin \left( \frac{n \pi x}{L} \right) \sin \left( \frac{m \pi y}{L} \right) \sin \left( \frac{p \pi z}{L} \right), ~(x,y,z)\in \Omega
\end{align*}
where \(n_x, n_y, n_z \in \mathbf{N}\), and the coefficients \(C_{n_x n_y n_z}\) are determined by expanding the initial condition into a three-dimensional Fourier series. In general geometric settings, the solution to the equation \(\dot{\phi}_e = \epsilon^* \Delta \phi_e + \lambda^* \phi_e\) satisfies that
\begin{align*}
\phi_e(t, x,y,z) = \sum_{n} C_n e^{(\lambda^* - \epsilon^* \mu_n) t} H_n(x,y,z)
\end{align*}
where \(\mu_n\) and \(H_n(x,y,z)\) are the eigenvalues and eigenfunctions of the Helmholtz equation on the geometric domain \(\Omega\); \(C_n\) is determined by projecting the initial condition onto the eigenfunction space;
the temporal evolution of the solution is governed by the exponential factor \(e^{(\lambda^* - \epsilon^* \mu_n)t}\), with the specific behavior depending on the relative magnitudes of \(\lambda^*\) and \(\epsilon^* \mu_n\).
For specific geometric shapes (such as spheres, cylinders, cubes, etc.), the eigenfunctions \(H_n(\mathbf{x})\) and eigenvalues \(\mu_n\) can be further expressed analytically in terms of special functions (such as Bessel functions, spherical harmonics, Fourier basis functions). If the geometric shape is complex or lacks an analytical solution, numerical methods (such as FEM) are required to solve the eigenvalue problem.

Apparently, a numerical solution is very inconvenient or even impossible
 in practical application, so our idea is to find a
solution from the point of view of control theory. A feedback mechanism is introduced 
in the inner-loop system, where the Dirichlet boundary condition is used as a controller to
support a rapid convergence to the reference deformation. Motivated by the boundary control for basic stabilization in \cite{c10}, here the Dirichlet boundary condition plays
a key role to drive the deformation to its reference, that is, we specially use the
deformation tracking error to implement a feedback.

Consider the Dirichlet boundary condition below on  $\partial \Omega$ 
\begin{align*}
&\phi_e(x, y,z)=0, ~ (x,y,z)\in \partial \Omega \setminus \{x=\delta\},\\
&\phi_e(1, y,z)=U_e(t, y,z), ~(y,z)\in \Gamma
\end{align*}
where $U_e(t, y,z)$ is the regulation boundary control law 
\begin{align*}
\phi_e(\delta,y,z)=-\frac{\lambda^*}{\epsilon^*}\int^\delta_0 \xi \frac{I_1\left(\sqrt{\frac{\lambda^*}{\epsilon^*}(\delta^2-\xi^2)}\right)} {\sqrt{\frac{\lambda^*}{\epsilon^*}(\delta^2-\xi^2)}}\phi_e(\xi, y, z) d\xi
\end{align*}
where $I_1(x)=\sum^\infty_{m=0} \frac{(x/2)^{2m+1}}{m!(m+1)!}$ is a 1st order revised Bessel function,
 $\phi_e(\xi, y, z) $ is used in a weighted integral on $[0,\delta]$.
 Then, the reference position of the end-effector is computed by the weighted summed distance of all deformation points.

\begin{lemma}\label{lemma3}  By using the control $U_e(t, y,z)$,
the inner-loop deformation tracking error  exponentially converges to zero. 
\end{lemma}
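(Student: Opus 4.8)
The plan is to invoke the backstepping method for boundary stabilization of reaction-diffusion PDEs from \cite{c10}, exploiting the cylindrical geometry to collapse the problem onto a family of one-dimensional problems. First I would fix $(y,z)\in\Gamma$ and read (\ref{eq:3dpdeerror2}) as the 1D reaction-diffusion equation $\partial_t\phi_e=\epsilon^*\partial_{xx}\phi_e+\lambda^*\phi_e$ on $x\in[0,\delta]$, carrying the homogeneous condition $\phi_e(t,0)=0$ inherited from $\partial\Omega\setminus\{x=\delta\}$ together with the actuated end $\phi_e(t,\delta)=U_e$. The destabilizing reaction term $\lambda^*\phi_e$ (active whenever $\lambda^*>\epsilon^*\mu_1$) is precisely what the boundary controller must neutralize.

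Second, I would introduce the Volterra backstepping transformation
\[
w(t,x)=\phi_e(t,x)-\int_0^x k(x,\xi)\,\phi_e(t,\xi)\,d\xi,
\]
and seek a kernel $k(x,\xi)$ mapping the plant onto the stable target system $\partial_t w=\epsilon^*\partial_{xx}w$ with $w(t,0)=w(t,\delta)=0$. Substituting into the target dynamics and integrating by parts twice (using $\phi_e(t,0)=0$) forces $k$ to solve the Goursat problem $\epsilon^*(k_{xx}-k_{\xi\xi})=\lambda^* k$ on the triangle $0\le\xi\le x\le\delta$, with $k(x,0)=0$ and diagonal condition $k(x,x)=-\tfrac{\lambda^*}{2\epsilon^*}x$. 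Its closed-form solution is
\[
k(x,\xi)=-\frac{\lambda^*}{\epsilon^*}\,\xi\,\frac{I_1\!\left(\sqrt{\tfrac{\lambda^*}{\epsilon^*}(x^2-\xi^2)}\right)}{\sqrt{\tfrac{\lambda^*}{\epsilon^*}(x^2-\xi^2)}},
\]
and imposing $w(t,\delta)=0$ reproduces \emph{exactly} the boundary law $U_e$ stated in the lemma, confirming that the prescribed control realizes the target system.

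Third, I would establish exponential decay of the target and transfer it back. For the heat equation with homogeneous Dirichlet data, the Lyapunov functional $V=\tfrac12\int_0^\delta w^2\,dx$ together with the Poincar\'e inequality $\int_0^\delta w_x^2\,dx\ge(\pi/\delta)^2\int_0^\delta w^2\,dx$ yields $\|w(t,\cdot)\|_{L^2[0,\delta]}\le e^{-\epsilon^*\pi^2 t/\delta^2}\|w(0,\cdot)\|_{L^2[0,\delta]}$. Since the Volterra map is boundedly invertible (its inverse $\phi_e=w+\int_0^x l(x,\xi)\,w\,d\xi$ has a bounded kernel $l$ solving the analogous Goursat problem), this decay propagates to the original error as $\|\phi_e(t,\cdot)\|\le M e^{-ct}\|\phi_e(0,\cdot)\|$ for some $M,c>0$. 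Because the kernel and the rate $c$ are independent of $(y,z)$, integrating over the cross-section $\Gamma$ delivers exponential convergence of the full 3D error field.

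The main obstacle is not the kernel identity (a classical Goursat/Bessel computation) but rigorously justifying the reduction to independent 1D fibers: the full Laplacian $\Delta=\partial_{xx}+\partial_{yy}+\partial_{zz}$ couples the transverse directions, so strictly one should first diagonalize in the transverse eigenbasis $\{e_j(y,z),\nu_j\}$ of $\Gamma$, which replaces $\lambda^*$ by $\lambda^*-\epsilon^*\nu_j$ in the $j$-th mode. One must then check that the single Bessel kernel above stabilizes \emph{every} transverse mode at once; here the transverse terms only weaken the effective reaction coefficient, so each reduced mode is at least as easy to stabilize, and the target-system decay rate can be taken uniform in $j$. Making this uniform-in-$j$ argument airtight, and confirming that one common boundary kernel suffices across all transverse modes, is the delicate step.
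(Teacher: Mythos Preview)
Your proposal follows exactly the paper's backstepping route: the same Volterra transformation in the $x$-variable, the same Bessel kernel, and the same exponentially stable heat-equation target, so the core argument is identical.

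The one place you diverge is in handling the transverse directions. You first freeze $(y,z)$ to get a 1D plant, then flag this as the ``main obstacle'' and propose decomposing in the transverse eigenbasis $\{e_j,\nu_j\}$ of $\Gamma$ and checking that the single kernel stabilizes every mode. The paper bypasses this entirely: because the kernel $k(x,\xi)$ is independent of $(y,z)$, the operators $\partial_{yy}+\partial_{zz}$ commute with the Volterra map, so applying the transformation to the full 3D plant (\ref{eq:3dpdeerror2}) yields directly the 3D target $\dot w=\epsilon^*\Delta w$ on $\Omega$ with homogeneous Dirichlet data on all of $\partial\Omega$ (the side and bottom conditions are inherited, and $w(\delta,y,z)=0$ is enforced by $U_e$). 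Exponential decay of this 3D heat equation is immediate, and no mode-by-mode uniformity argument is needed. Your eigendecomposition fix is correct and would work, but it is more laborious than necessary; the paper's observation that the transverse Laplacian simply passes through the $x$-only transform is the cleaner resolution of the concern you raised.
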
 
\begin{proof}  (sketch)
Introduce the nonlinear transformation
\begin{align*}
&w(x,y,z) = \phi_e(x,y,z) - \int_{0}^{x} k(x,\xi)\phi_e(\xi,y,z)  d\xi\\
&k(x,\xi)=-\frac{\lambda^*}{\epsilon^*} \xi \frac{I_1\left(\sqrt{\frac{\lambda^*}{\epsilon^*}(x^2-\xi^2)}\right)} {\sqrt{\frac{\lambda^*}{\epsilon^*}(x^2-\xi^2)}}
\end{align*}
where $k(x,y)$ is a kernel function satisfying the conditions 
\begin{align*}
 &k_{xx}(x, y) - k_{yy}(x, y) = \lambda^* k(x, y),\\
 & k(x, 0) = 0, ~k(x, x) = -\frac{\lambda^* x}{2}.
\end{align*}
By calculations, the following target system of $w(x,y,z)$ 
\begin{align*}
\dot w =\Delta w, ~w(0,y,z)=w(\delta,y,z)=0
\end{align*}
 is exponentially stable, and thus the tracking error $\phi_e$.
\end{proof}

\newtheorem{therem}{\bf Therem}
\begin{therem}\label{therem1}
By implementing the admittance control law (\ref{eq:f2d}) in outer-loop and the boundary control  $U_e(t, y,z)$ in the inner-loop,
the closed-loop system is stabilized by performing  the contact force and deformation tracking control.
\end{therem} 
\begin{proof} It is a direct conclusion of lemmas 2 and 3. \end{proof}

We emphasize that the proposed control strategy is also applicable for more general geometric settings,
where the contact force is actuated on a flat enough area or the contact area is relatively small compared 
to a large object. What's more, since the form of reaction-diffusion PDE is linear in the force-deformation fields,
the solution under multiple Dirichlet boundary conditions,  can be analyzed using the principle of superposition,
that is, the global solution is the sum of all local ones.  So our method is extendable to 
a large viscoelastic object operated by multiple robotic end effectors.

\section{EXPERIMENTAL VALIDATION}

The proposed CATCH-FORM-3D framework is validated by examing its performances across diverse materials and conditions, including (1)  force control precision and (2) deformation tracking accuracy.  Typical
material properties of stiffness, viscoelasticity, and surface geometry, are specially considered as independent variables, with force tracking error (FTE, $N$) and composite deformation error (CDE, $mm^2$) serving as the primary dependent metrics.
This experiment employs a PaXini hand (DexH13)  mounted on  a Realman RM arm, which is equipped with
high-precision tactile arrays (PX6AX-GEN2-DP-M2826, spatial resolution: 2.0–2.5 $mm$, force resolution: 0.01 $N$)  
that is able to acquire data at 60$Hz$, with statistical significance assessed via the repeated-measures ANOVA ($\alpha=0.05$, $n=5$ trials/condition).



\subsection{Material Properties and Setup}

To evaluate the controller's performance on diversified viscoelastic objects, we selected a set of calibrated material samples,
with typical elastic, viscoelastic, and rigid properties respectively (Tab.\ref{tab:material_properties}). 
A variety of industrial, medical or household objects are selected (Fig.\ref{fig:obj_press}), ranging from silicone blocks, metal components, foam to fragile items. Material property testing was implemented in accordance with the standards ASTM D2240 (Shore hardness), ASTM E8 (tensile properties), and ASTM D695 (compression testing), using calibrated measurement equipment (Instron 5569A, MTS Criterion 43). The test samples were fixed on an acrylic platform to facilitate experimental operations. 
The robotic manipulation is performed by achieving controllable vertical compression trajectories, with a speed range of 2 to 20 $mm/s$ and a maximum deformation depth of 1 to 20 millimeters, calibrated according to the specific mechanical properties of each material.  Data acquisition and real-time control were realized via
 a ROS-based computational architecture executing at 1$kHz$ on an Intel i7-9700K platform.

\begin{figure}[h]
\centering
\includegraphics[width=0.5\columnwidth, height=6cm, keepaspectratio]{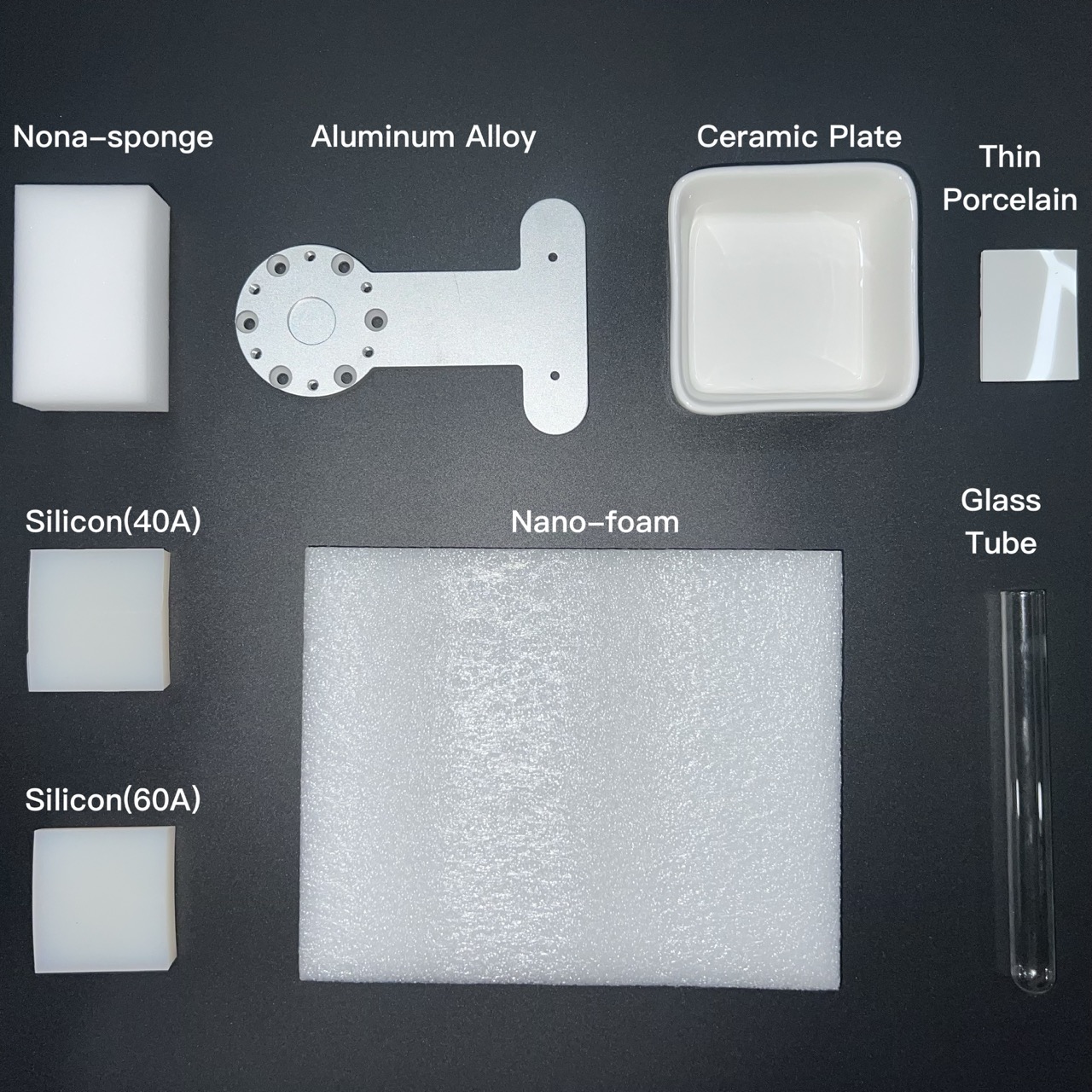}
\caption{Test items of viscoelastic objects.}
\label{fig:obj_press}
\end{figure}

\begin{table}[h]
\centering
\caption{Material Properties for Experimental Validation}
\label{tab:material_properties}
\renewcommand{\arraystretch}{1.2}
\setlength{\tabcolsep}{5pt}
\begin{tabular}{|p{2.2cm}|p{1.8cm}|p{2.8cm}|}
\hline
\textbf{Material Class} & \textbf{Variant} & \textbf{Properties} \\
\hline\hline
\multirow{2}{*}{Industrial Silicone} 
& Shore 40A & $E=1.8\pm0.2$ MPa, $\nu=0.48$ \\
\cline{2-3}
& Shore 60A & $E=3.5\pm0.3$ MPa, $\nu=0.47$ \\
\hline
\multirow{2}{*}{Precision Metals} 
& Aluminum & HB150, $E=69.5$ GPa \\
\cline{2-3}
& Steel & HB280, $E=210$ GPa \\
\hline
Nano-foam & $\rho=30$ kg/m$^3$ & $E=0.5\pm0.1$ MPa, $\nu=0.20$ \\
\hline
\multirow{2}{*}{Fragile Items} 
& Ceramic & 1 mm thickness \\
\cline{2-3}
& Glass Tubes & $\emptyset 8$ mm, 0.5 mm wall \\
\hline
\end{tabular}
\end{table}

\begin{figure}[htbp]
\centering  
\subfigure[]{   
\begin{minipage}{3cm}
\centering    
\includegraphics[scale=0.02]{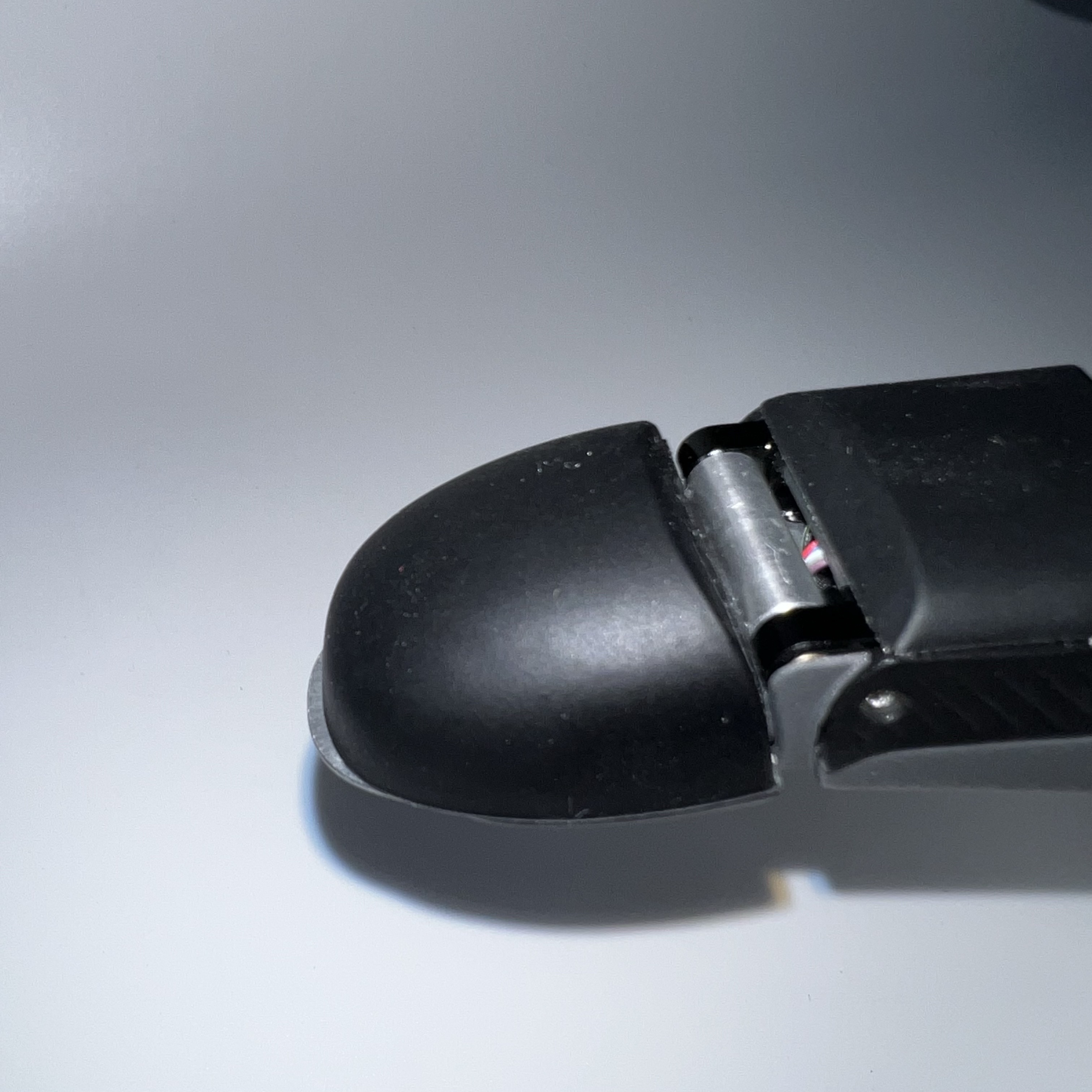}  
        \label{fig:tactile_physical}
\end{minipage}
}
\subfigure[]{ 
\begin{minipage}{3cm}
\centering    
\includegraphics[scale=0.042]{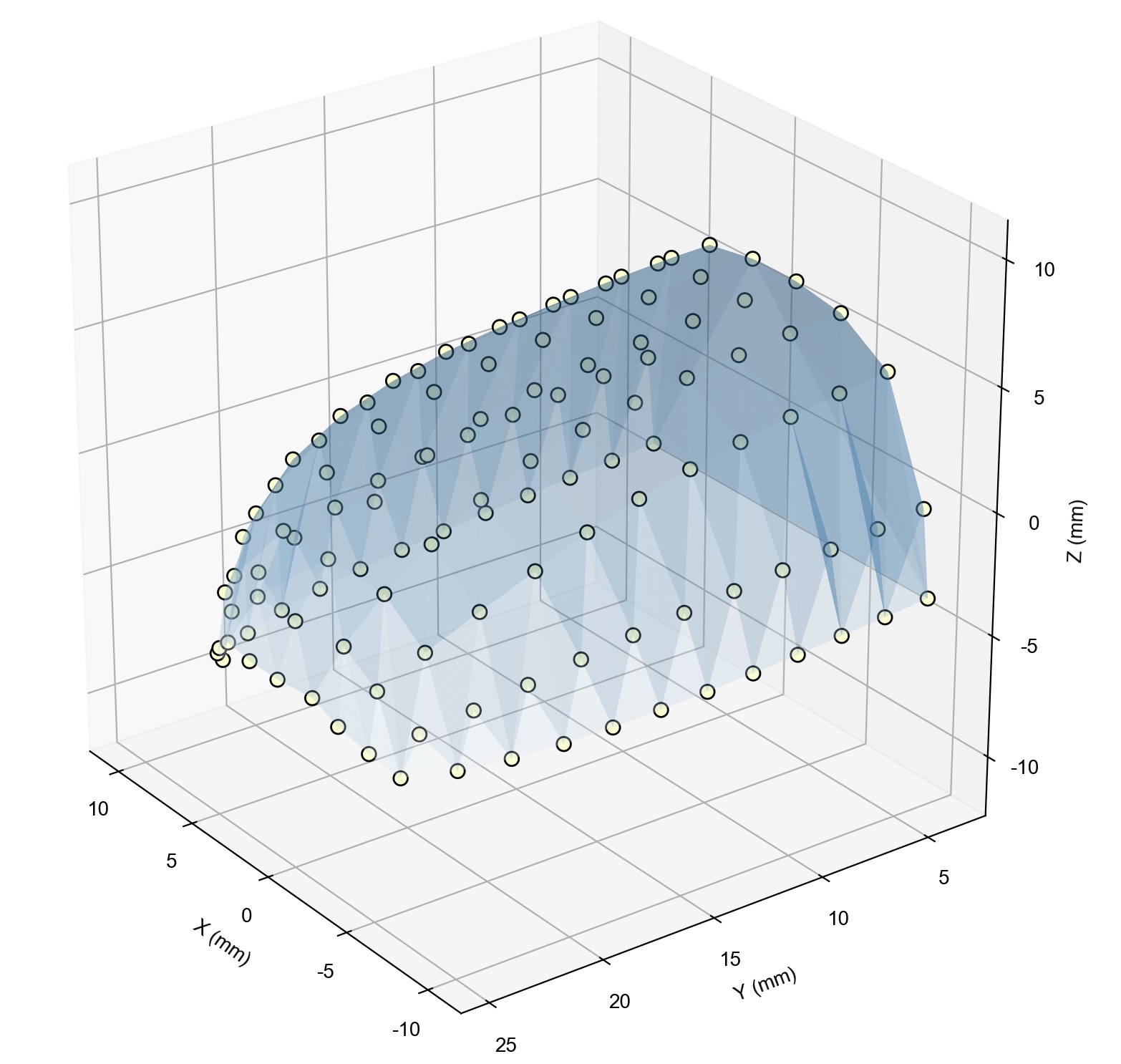}
        \label{fig:tactile_simulation}
\end{minipage}
}
\caption{PaXini fingertips DexH13 with high-resolution tactile sensor arrays PX6AX-GEN2-DP-M2826: ~(a) physical diagram; ~~(b) rendering diagram.}    
    \label{fig:tactile_sensor_configuration}
\end{figure}

 \begin{figure*}[h]
    \centering
    \includegraphics[width=0.9\textwidth]{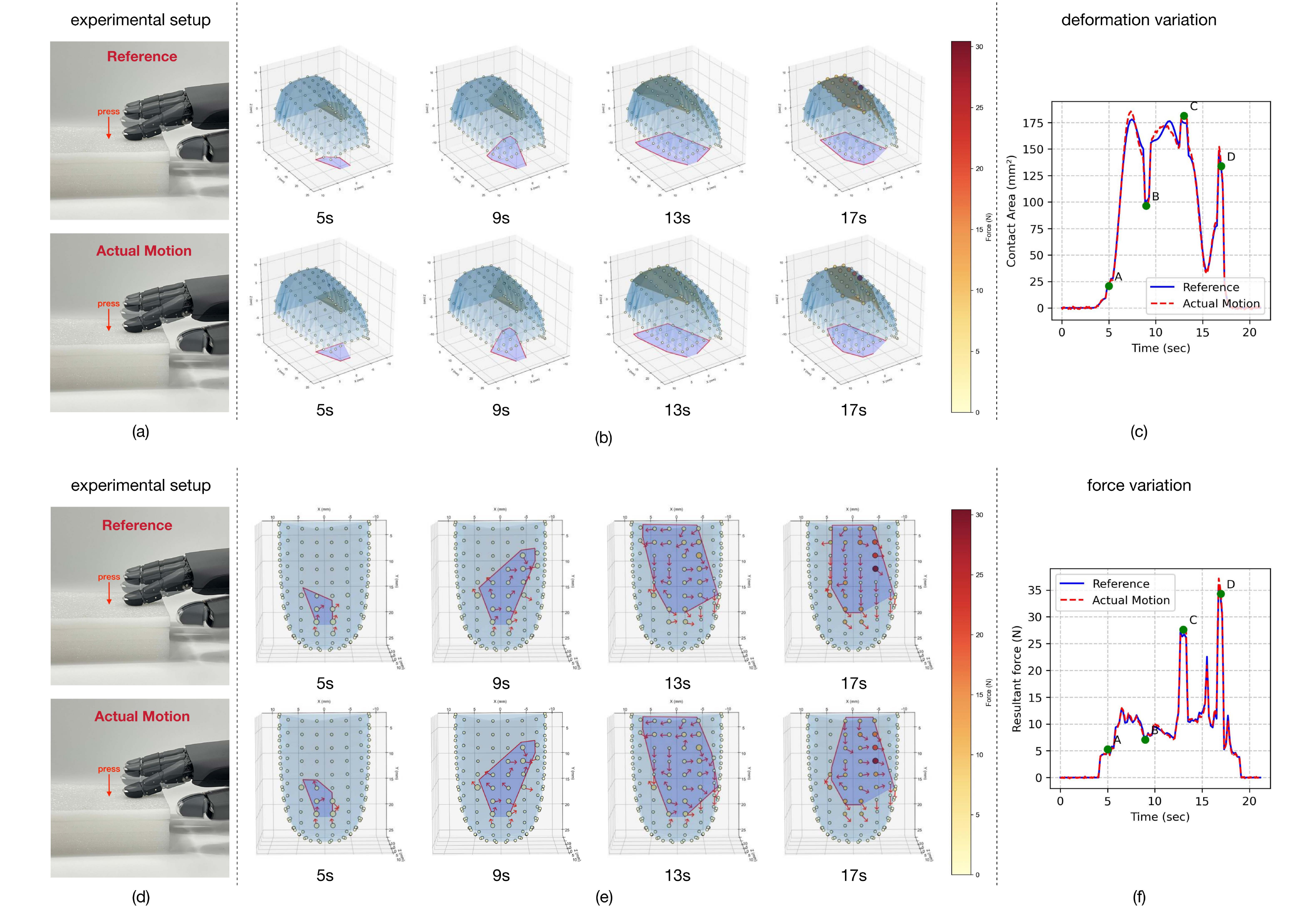}
    \caption{Foam compression test. Left: experimental setup and process. Middle: force-deformation response. Right: deformation and resultant force curves.}
    \label{fig:deformatio_and_force}
\end{figure*}

 \begin{table*}[h]
\centering
\caption{Force Control and Deformation Performance at Different Time Points}
\label{tab:performance_comparison}
\renewcommand{\arraystretch}{1.2}
\setlength{\tabcolsep}{5pt}
\begin{tabular}{|p{2.2cm}|c|c|c|c|c|c|c|c|}
\hline
\multirow{2}{*}{\textbf{Material}} & \multicolumn{4}{c|}{\textbf{Force Control Error (N)}} & \multicolumn{4}{c|}{\textbf{Deformation Error}} \\
\cline{2-9}
& \textbf{5s} & \textbf{9s} & \textbf{13s} & \textbf{17s} & \textbf{5s} & \textbf{9s} & \textbf{13s} & \textbf{17s} \\
\hline\hline
Silicone (40A) & $0.48 \pm 0.10$ & $0.46 \pm 0.10$ & $0.45 \pm 0.10$ & $0.45 \pm 0.10$ & $0.84 \pm 0.15$ & $0.82 \pm 0.15$ & $0.82 \pm 0.15$ & $0.82 \pm 0.15$ \\
\hline
Silicone (60A) & $0.58 \pm 0.13$ & $0.55 \pm 0.12$ & $0.55 \pm 0.12$ & $0.55 \pm 0.12$ & $0.80 \pm 0.14$ & $0.78 \pm 0.14$ & $0.78 \pm 0.14$ & $0.78 \pm 0.14$ \\
\hline
Nano-foam & $0.43 \pm 0.08$ & $0.38 \pm 0.07$ & $0.35 \pm 0.07$ & $0.35 \pm 0.07$ & $0.88 \pm 0.16$ & $0.86 \pm 0.16$ & $0.85 \pm 0.16$ & $0.85 \pm 0.16$ \\
\hline
Ceramic Plate & $0.23 \pm 0.05$ & $0.20 \pm 0.04$ & $0.20 \pm 0.04$ & $0.20 \pm 0.04$ & $0.73 \pm 0.13$ & $0.71 \pm 0.13$ & $0.71 \pm 0.13$ & $0.71 \pm 0.13$ \\
\hline
Glass Tube & $0.18 \pm 0.03$ & $0.15 \pm 0.03$ & $0.15 \pm 0.03$ & $0.15 \pm 0.03$ & $0.70 \pm 0.12$ & $0.68 \pm 0.12$ & $0.68 \pm 0.12$ & $0.68 \pm 0.12$ \\
\hline
Thin Porcelain & $0.15 \pm 0.03$ & $0.13 \pm 0.02$ & $0.13 \pm 0.02$ & $0.13 \pm 0.02$ & $0.67 \pm 0.11$ & $0.65 \pm 0.11$ & $0.65 \pm 0.11$ & $0.65 \pm 0.11$ \\
\hline
\multicolumn{9}{l}{\small Note 1: Force control errors are calculated based on the resultant force magnitude} \\
\multicolumn{9}{l}{\small Note 2: Deformation tracking errors are calculated using the dual-validation index.} \\
\end{tabular}
\end{table*}

\subsection{Force Tracking Performance}

The force control performance for various materials is verified
by applying a dynamic target force field (the upper row of Fig.\ref{fig:deformatio_and_force}(e))
and monitoring force distribution over 17 seconds, sampled at $t=5s, 9s, 13s, 17s$. 
For comparison, the force field on nano-foam is shown in the lower row of  Fig.\ref{fig:deformatio_and_force}(e),
revealing an evolution from initial contact at $t=5s$ to stable force maintenance at $t=17s$.
 The responses of tracking errors in the resultant forces of all materials demonstrate excellent steady-state performance  
in the left 4 rows in Tab. II, and the deformation tracking errors as the right 4 rows.
For all materials, the controller achieved precise force maintenance, with the total force tracking errors gradually decreasing from the initial contact to a stable state. Specifically, nano-foam samples showed a gradual control accuracy, 
with errors reducing from \(0.43 \pm 0.08 N\)  at $t=5s$  to \(0.35 \pm 0.07 N\) at steady state. Similar stability was observed in silicone samples, with 40A and 60A variants maintaining consistent performance after initial settling (\(0.45 \pm 0.10 N\) and \(0.55 \pm 0.12 N\), respectively). Rigid materials achieved even lower force tracking errors: $0.20\pm0.04N$ for ceramic plates and $0.15\pm0.03N$ for glass tubes. In this test, force control errors remained below $5\%$ across all materials.



\subsection{Deformation Control Precision}



The tactile sensor array generates continuous 3D surface topography data (Fig.\ref{fig:deformatio_and_force}(b)), enabling precise quantification of contact morphology evolution. Each tactile unit's activation represents a spatial deformation event, with corresponding coordinates and force magnitude, forming a deformation field that captures the material's response under force. Synchronized acquisition of deformation profiles (Fig.\ref{fig:deformatio_and_force}(c)) and resultant forces (Fig.\ref{fig:deformatio_and_force}(f)) allows comprehensive characterization of viscoelastic responses across material domains. A dual-validation index is defined to assess deformation, which combines spatial distribution accuracy and absolute area size by a weighted sum, i.e., $\epsilon_{total} = \alpha\cdot\epsilon_{dist} + \beta\cdot\epsilon_{area}$, where $\epsilon_{dist} $ is the quadratic mean of $(p_i^r - p_i^a)$ with
$p_i^r$, $p_i^a$  the coordinates of reference points and actual activated tactile points  respectively in the taxel field visualization (Fig.\ref{fig:deformatio_and_force}(e)), $\epsilon_{area} = |A_{target} - A_{actual}|$ with  $A_{target}$ and $A_{actual}$ the referenced areas and the measured contact areas respectively,  by performing convex hull calculation of the activated taxel distribution as visualized in Fig.\ref{fig:deformatio_and_force}(c). $\alpha=0.4$ and $\beta=0.6$. 
 The experimental results, as shown in Tab.\ref{tab:performance_comparison}, demonstrate consistent deformation control precision across different materials and time points.  For viscoelastic materials such as nano-foam, the deformation tracking error shows gradual accuracy from $0.88\pm0.16$ at $t=5s$ to $0.85\pm0.16$ at $t=17s$.  Similar stability is observed in silicone specimens, with 40A and 60A variants achieving stable deformation control with errors of $0.82\pm0.15$ and $0.78\pm0.14$ respectively.
Rigid materials exhibited even more precise deformation control, with ceramic plates maintaining deformation errors of $0.71\pm0.13$ and glass tubes achieving $0.68\pm0.12$ at steady state. These results demonstrate the controller's ability to maintain precise deformation while simultaneously ensuring stable force application across diverse material properties.
Fig.\ref{fig:deformatio_and_force}(c) illustrates the temporal evolution of contact area during a representative foam compression sequence. The deformation field (Fig.\ref{fig:deformatio_and_force}(b)) shows the progression of contact states at 
$t=5s, 9s, 13s, 17s$, corresponding to the key transition points (A, B, C, and D) marked in the deformation curve. 
The controller exhibits precise tracking of complex deformation profiles, maintaining synchronization with reference trajectories even during rapid transitions between these temporal states.  During the entire operation process, the gradual changes in contact geometry and force distribution clearly demonstrate the controller's ability to maintain a stable contact pattern.
The statistical analysis of tracking accuracy indicates that, under all experimental conditions, the standard deviations of the temporal error distributions are less than $5.0\%$ of the target values.

\section{CONCLUSIONS}

This paper studies the precise manipulation of viscoelastic material by developing a framework of CATCH-FORM-3D, combining 3D Kelvin–Voigt and Maxwell dynamics into a unified PDE model, and a dual-loop architecture integrating physics-induction admittance
control with a reaction- diffusion PDE-based boundary control. Experiments on several materials achieve sub-millimeter deformation accuracy and $\pm 5 \%$ force deviation, demonstrating robust precision in dynamic environments.  The proposed parameter estimator, admittance regulator and boundary controller are suitable for promotion to a wider range of application scenarios, such as
industrial manufacturing, polymer molding, medical procedures, and everyday domestic activities.

\addtolength{\textheight}{-12cm}   





\section*{ACKNOWLEDGMENT}


Authors are especially grateful to PaXini Technology for assisting in the construction of the experimental  platform.


\end{document}